\newcommand{\omitthis}[1]{}
\newcommand{\E}{{\bf E}\,}
\newtheorem{definition}{Definition}
\newtheorem{example}{Example}
\newtheorem{proposition}{Proposition}
\newtheorem{remark}{Remark}
\newcommand{\comment}[1]{}
\newcommand{\ba}{\mbox{\boldmath $a$}}
\newcommand{\sC}{{\cal C}}
\newcommand{\bD}{\mbox{\boldmath $D$}}
\newcommand{\sG}{{\cal G}}
\newcommand{\sN}{{\cal  N}}
\newcommand{\sR}{{\cal R}}
\newcommand{\bbR}{\mathbb{R}}
\newcommand{\sT}{{\cal T}}
\newcommand{\bv}{\mbox{\boldmath $v$}}
\newcommand{\bV}{\mbox{\boldmath $V$}}
\def\mAth{\mathsurround=0pt}
\def\eqalign#1{\,\vcenter{\openup1\jot \mAth
  \ialign{\strut\hfil$\displaystyle{##}$&$\displaystyle{{}##}$\hfil
    \crcr#1\crcr}}\,}
\newcommand{\eq}{\begin{equation}}
\newcommand{\eeq}{\end{equation}}
\long\def\omitthis#1{}
\newcommand{\ist}{{\rm({\it i}\,{\rm )}}}
\newcommand{\ind}{{\rm({\it ii}\,{\rm )}}}
\newcommand{\ird}{{\rm({\it iii}\,{\rm )}}}
\newcommand{\ifourth}{{\rm({\it iv}\,{\rm )}}}
\newcommand{\itemi}{\item[\ist]}
\newcommand{\itemii}{\item[\ind]}
\newcommand{\itemiii}{\item[\ird]}
\newcommand{\itemiv}{\item[\ifourth]}
\newcommand{\given}{\,|\,}
\newcommand{\smalleq}{\hskip -1pt = \hskip -1pt}
\DeclareRobustCommand{\nand}{\mathbin{\mathpalette\n@and@or\land}}
\DeclareRobustCommand{\nor}{\mathbin{\mathpalette\n@and@or\lor}}
\newcommand{\n@and@or}[2]{%
  \vphantom{#2}%
  \ooalign{$\m@th#1#2$\cr\hidewidth$\m@th#1\sim$\hidewidth\cr}%
}
\begin{document}
\title{\bf On the Use of Generative Models\\ in Observational Causal Analysis} 
\author{Nimrod Megiddo\thanks{IBM Almaden Research Center, San Jose, California}}
\date{June 2023}
\maketitle
\parindent=0pt

\begin{abstract}
The use of a hypothetical generative model was been suggested for causal analysis of observational data. 
The very assumption of a particular model is a commitment to a certain set of variables and therefore to a certain
set of possible causes. Estimating the joint probability distribution of can be useful for predicting values of variables
in view of the observed values of others, but it is not sufficient for inferring causal relationships. The model describes 
a single observable distribution and cannot a chain of effects of intervention that deviate from the observed distribution.
\end{abstract}

\section{\hskip -15pt. Introduction}
The possibility of inferring causal relations from observational data has been discussed since the 19th century.
It is well understood that observational data, by which we mean samples from a certain probability space, 
can be used for estimating the underlying probability distribution, 
and such samples can be used to infer dependencies among random variables.
However, inference of causality relations requires some ``intervention.''
A real intervention amounts to collecting data from a different distribution.
A ``mental'' intervention is a certain calculation that relies not only on the sampled data
but also on a certain hypothetical generative model.

A generative model may be a useful means for infusing into the analysis some external knowledge in addition to the observed data.
When a model is adopted, a certain set of random variables is assumed, and their joint probability distribution
can be estimated.
The prevalent model \cite{pearl2009causality} is that of a directed acyclic graph (DAG).
The DAG generative model was previously questioned \cite{dawid2010beware}.  
One objection to the DAG is that some assumptions of the represented by directed edges of the graph may be hard to justify.

Here we raise a more general objection, 
which is independent of the particular type of the generative model. 
We discuss the implication of assuming {\em any} specific generative model with regard to causal inference. 

Drug discovery is one of the areas in which causal analysis is hoped to have a breakthrough. 
This is one of the cases where the observational data is about a sample of human individuals.
We focus here on such cases for the sake of intuitive illustration. 
Also, for simplicity we restrict attention to binary variables.


\section{\hskip -15pt. Sample spaces, attributes and random variables}
Data analysis is typically performed under the assumption that the data  was sampled from some finite probability space whose
sample space is
$ \Omega = \{\omega_1,\ldots,\omega_n\}$.
For intuition, we assume the sample space is a finite population of humans, 
denoted $N=\{1,\ldots,n\}$.
Suppose we wish to find whether a certain ``treatment'' causes a certain ``response,'' 
or whether some treatment causes a stronger response than another treatment.
Some data about individuals of the population, 
including possible treatments and responses, 
is represented by {\em attributes}
$a_j: N \to \bbR$, $j\in M = \{1,\ldots,m\}$. 
For simplicity, we assume all the attributes are binary, i.e., $a_j(i) \in\{0,1\}$.
Thus, each individual $i \in N$ has an associated $m$-tuple of attribute values $\ba_i = (a_{i1},\ldots,a_{im})$, 
where $a_{ij} = a_j(i)$, $j\in M$.
In particular, $a_{ij}$ may indicate whether the individual $i$ received a certain treatment $\sT$.
Alternately, $a_{ij}$ may indicate whether the individual $i$ presented a certain response $\sR$.

The data can be analyzed in different contexts, which are formalized by different probability spaces as described below.
In the {\em purely-observational} setting, a random sample of individuals, $S\subset N$ is taken, 
and the $m$-tuples $\ba_i$, $i\in S$, are observed. 
In other settings, there may also exist attributes whose values are determined only after sampling the individuals.%
\footnote{For clarity, we reserve the notation $a_j$ only for attributes whose values are fixed in advance.}
Such attributes are actually mappings $b_j : S \to\{0,1\}$, i.e., their domain is the sample rather than the population.
For example, in a randomized controlled trial, individuals are first sampled, and a subgroup is thereafter selected for treatment; the members of
the sample are later observed for responses. 
The values of the attributes indicating treatment and response are determined {\em after} the sampling.
Human individuals of course have many attributes whose values are fixed before the sampling and assignment to treatment groups.

When one individual $i\in N$ is sampled, each attribute $a_j: N \to \{0,1\}$, $j\in M$, 
defines a random variable $X_j$ such that $X_j = a_j(i)$.   
Attributes whose values are determined after the individual has been sampled also define respective random variables if the values are generated probabilistically.

We distinguish two scenarios as follows. 

{\em Scenario} I. For each individual $i\in N$, it is already known whether $i$ was treated with $\sT$, 
which we denote by $t(i)=1$; otherwise, $t(i)=0$.  
Also, it is known whether $i$ presented the response $\sR$, 
which we denote by $r(i)=1$; otherwise, $r(i)=0$. 
Let $X:N\to \{0,1\}$ be the binary random variable defined by the attribute $t$, namely, if $\omega = i$, then
$X(\omega)= t(i)$, i.e., $X(\omega)=1$ if and only if $\omega$ was treated with $\sT$.
Similarly, let $Y:N\to \{0,1\}$ be the binary random variable define by the attribute $r(\cdot)$, namely, if $\omega = i$, then
$Y(\omega)= r(i)$, i.e., $Y(\omega)=1$ if and only if $\omega$ was treated with $\sR$.
This scenario represents to the so-called observational analysis. 
It is only observational because the analyst cannot control who gets the treatment.

{\em Scenario} II.   The decision of whether a sampled  $i \in N$  gets the treatment is made outside of the original space.
It can be considered an ``intervention."
Suppose the underlying ``random trial" of picking $i$ from $N$ is expanded, and only  {\em after} $i$ has been picked, 
the decision of whether  $i$ receives the treatment $\sT$ is made independently by a certain random process. 
For example, it can be a symmetric process in the sense that for every every pair of 
subsets $T_1, T_2 \subseteq S$ of the sample, if $|T_1|=|T_2|$ then $T_1$ and $T_2$ have the same probability of being the set of those members who get treated. 
Finally, after treatment of the members of $T\subset S$, each member of $S$ is checked for the response $\sR$.
This scenario represents a randomized controlled trial.

\subsection{A remark on different probability spaces and graphical models}   \label{twovar}
Consider the simple case of two binary random variables $X$ and $Y$.
It can arise in different probability spaces. 
First, the sample space may be the minimal one:
\[ \Omega = \{(0,0), (0,1),(1,0),(1,1) \} ~.\]
The joint probability distribution of $X$ and $Y$ can be described by a so-called graphical model \cite{jordan1999learning}.
One possibility of a graphical model may state that the value of $X$ is picked to be equal to $1$ with probability $p$, 
and thereafter,  the value of $Y$ is picked to be equal to $1$ with probability $p_1$ if $X=1$, 
and with probability $p_0$ if $X=0$.  
Then, the probability measure on $\Omega$ is as follows.
\[ \eqalign{ \Pr((0,0)) =&\ (1-p)\,(1-p_0)\cr
                  \Pr((0,1)) =&\ (1-p)\,p_0\cr
                  \Pr((1,0)) =&\ p\,(1-p_1)\cr
                  \Pr((1,1)) =&\ p\,p_1 ~.}\]
Here the values of $X$ and $Y$ determine a {\em unique} point in $\Omega$.
Note, that $X$ and $Y$ are not the only binary random variables on this space. 
In many cases of interest, however, $\Omega$ represents a set of individuals $\omega$, each with
equal probability of $\frac{1}{n}$, and for every $(x,y)\in \{0,1\}^2$ there may exist more than one $\omega\in\Omega$
such that $(X(\omega),~Y(\omega) ) = (x,y)$.  
With multiple points $\omega\in\Omega$ such that $(X(\omega),~Y(\omega) ) = (x,y)$,
 it is not true that the pair $(X,Y)$ determines a unique point of the sample space.
 Obviously, a graphical model that involves only $X$ and $Y$ describes {\em only} how the values of these
 variables are generated. 
 Thus, by definition, such a model rules out any other random variable $Z$ that in reality may
 be the cause of both $X$ and $Y$.  
 It is obvious and well-known that ignoring $Z$ (which may even be unknown) may lead to wrong causal conclusions.

\section{\hskip -15pt. Purely-observational data} \label{PO}
When the data is purely observational, the sample space is simply the population $N$.
For simplicity, we consider sampling with replacement.
Each $i\in N$ has the same probability $\frac{1}{n}$ to be sampled during each of the $s$ sampling steps. 
Denote this probability space by $\sN^s$.
The vectors of attribute values $\ba_1,\ldots,\ba_n$ are fixed in advance. 
In this setting, when one individual is sampled, the $m$ attributes define random variables $X_1,\ldots,X_m$, so that
for $j=1,\ldots,m$, if individual $i$ is sampled, then $X_j = a_{ij}$.
If $s$ individuals are sampled, we have random variables $X_{kj}$, ($k=1,\ldots,s$, $j=1,\ldots,m$), so that
if the $k$th member of the sample is the individual $i\in N$, then $X_{kj} = a_{ij}$.
This can be formalized using the space $\sN^s$,
where each sample point is an $s$-tuple of elements of $N$.
A sample $S = \{i_1,\ldots,i_s\}$ comes with the data $\bD_S=(\ba_{i_1}, \ldots, \ba_{i_s})$.
Note that even without the vectors of attribute values $\ba_1,\ldots,\ba_n$, there implicitly exist  $2^n-1$ binary random variables $Y_T$
($\emptyset \neq T\subseteq N$), defined by
$Y_T = 1$ if and only if $i \in T$, where $i$ is the sampled individual.

Consider the question of whether a certain treatment $\sT$ causes a certain response $\sR$.
Suppose the set of attributes $\{a_1,\ldots,a_m\}$ contains \ist\ an attribute $a_j$ so that for every $i \in N$, 
$a_j(i) = 1$ if and only if individual $i$ was treated with $\sT$, 
and \ind\ another attribute $a_\ell$ so that for every $i\in N$, 
$a_\ell(i) = 1$ if and only if $i$ presented the response $\sR$.

\begin{example} \label{ex:2}\rm
Suppose a random sample $S$ of size $200$ was taken from $N$ and
the counts within the set of the first 100 individuals are given in the following table:

\begin{center}
\begin{tabular}{c|c|c|}
   & $a_\ell=0$ & $a_\ell=1$  \\
   \hline
 $a_j=0$ & 81 & 9 \\
 \hline
 $a_j=1$ & 9   & 1 \\
 \hline
 \end{tabular}
 \end{center}
and the counts within the set of the last 100 are given in the following table:
\begin{center}
\begin{tabular}{c|c|c|}
   & $a_\ell=0$ & $a_\ell=1$ \\
   \hline
 $a_j=0$ & 1 &  9 \\
 \hline
 $a_j=1$ & 9   & 81 \\
 \hline
 \end{tabular}
\end{center}
Thus, 90\% of the members of the first set were not treated, and  90\% of the members of the first set did not present the response.
Also,  90\% of the members of the first set were treated, and  90\% of the members of the first set  presented the response.
Denote by $X_t$ and $X_r$ the random variables associated with the attributes $a_j$ and $a_\ell$, respectively.
Thus,  if $\omega = i$, then $X_t(\omega) = a_j(i)$ and $X_r(\omega) = a_\ell(i)$.
It seems that within each set, $X_t$ and $X_r$ are independent. 
The combined counts are given in the following table:

\begin{center}
\begin{tabular}{c|c|c|}
   & $a_\ell=0$ & $a_\ell=1$ \\
   \hline
 $a_j=0$ & 82 &  18 \\
 \hline
 $a_j=1$ & 18   & 82 \\
 \hline
 \end{tabular}
 \end{center}
\end{example}
which suggests that the treatment and the response are correlated.
We cannot conclude that the treatment causes the response because within each of the two sets
the treatment and the response seem independent. 
In the purely-observational case we do not know how individuals were picked for treatment. 
However, it is unlikely that the decision to treat was independent of the set to which the individual belonged.

The choice to look at the first $100$ and the last $100$ separately is arbitrary. 
In fact, there are $2^n-2$ different ways to partition the sample,
and one of them even matches perfectly the set of individuals who presented the response. 
However, that partition is useless for inference because it could be identified only after the data of treatments and responses has been generated. 
This observation motivates the use of a randomized controlled trial treatment, as we show later.

\paragraph{A remark on unknown confounding variables.}
The above discussion leads to an observation as follows.
It is often said that observational data cannot be used to infer a causal relation, 
because the treatment and response may be correlated by some unknown confounding variable, 
and it is not known whether such a confounding variable exists.
It is interesting to note though that if the response appears in the observational data to be dependent on the treatment, 
then, mathematically speaking, the treatment binary variable {\em itself} confounds the treatment and response variables.
There are of course several more such ``confounders,'' namely those whose sets overlap the treatment set significantly.
Therefore, the existence of a confounding variable is not questionable at all.
The fact that the data set at hand is a random sample from the population is not related to the lack of information with regard to how
it was decided which individuals to treat.
Of course, it is interesting to find whether a more ``natural'' confounding variable exist, but this is not a 
mathematically well-defined question.

\section{\hskip -15pt. Randomized controlled trial of treatment}  \label{RCT}
%
The concept of a randomized controlled trial of size $s$ with $t$ treated subjects can be formalized as follows.
\begin{definition} \label{def:trial} \rm
A sequence $S=(i_1,\ldots,i_s)$ is generated by sampling from $N$, without replacement, $s$ times.  
The $s$ members of $S$ are the ones picked for participation in the trial. 
Then, a subsequence $T\subset S$ of size $t$ is picked at random for receiving the treatment $\sT$. 
The members of $U = S\setminus T$ either receive a ``placebo'' treatment or are not treated at all.
Later, each member of $S$ is examined to see whether it presents the response $\sR$.
\end{definition}
Let $\sC_0$ denote the probability space whose sample space is the set of all pairs of sequences $(S,T)$
as in Definition \ref{def:trial}.  
Thus, the size of $\sC_0$ is equal to
\[ n(n-1)\cdots (n-s+1) \cdot {s \choose t} = \frac{n!\,s!}{(n-s)!\, t!\, (s-t)!} ~.\]
Let $\sC$ denote the probability space whose sample space is the set of triples $(S,T,R)$,
where $(S,T)$ is a point in $\sC_0$ and $R\subseteq S$.
Define binary random variables $T_1,\ldots,T_s$  and $R_1,\ldots,R_s$
so that for $k=1,\ldots,s$, $T_k = 1$ if and only if $i_k \in T$,
and $R_k = 1$ if and only if $i_k \in R$.

The scenario of a randomized controlled trial is very different from the one described in Section \ref{PO}. 
Fix $a_j:N\to\{0,1\}$ ($j\in M$)  to be any attribute whose value is determined prior to the sampling.
Denote
\[ |a_j| = |\{i\in N: a_j(i) = 1\}| ~. \]
The attribute $a_j$ can also be interpreted as
the subset $N_j\subseteq N$, where $i \in N_j$ if and only if $a_j(i) = 1$.  
The random choice of the sample $S = \{i_1,\ldots,i_s\}$ induces binary random variables $A_{kj}$ ($k=1,\ldots,s$, $j\in M$)
where $A_{kj} = a_j(i_k)$. 
\begin{proposition} \label{prop:indep}
For $k=1,\ldots,s$ and $j\in M$, the variables $T_k$ and  $A_{kj}$ are independent.
\end{proposition}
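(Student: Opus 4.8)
The plan is to exploit the two-stage structure of the trial in Definition~\ref{def:trial}: the sample $S=(i_1,\ldots,i_s)$ is drawn first, and the treated subsequence $T$ is selected only afterwards, by a mechanism that sees only the positions $1,\ldots,s$ and never the attribute vectors $\ba_{i_1},\ldots,\ba_{i_s}$. Since $A_{kj}=a_j(i_k)$ is a function of the first stage alone while $T_k$ is a function of the second stage alone, the independence should fall out of the independence of the two stages. (Both $T_k$ and $A_{kj}$ are determined by the pair $(S,T)$, whose law under $\sC$ is the measure $\sC_0$, so it suffices to argue in $\sC_0$.)

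First I would make the factorization of the uniform measure on $\sC_0$ explicit. Picking $T\subseteq S$ of size $t$ is the same as picking a $t$-element set of positions $K\subseteq\{1,\ldots,s\}$, so $\sC_0$ is the uniform distribution on the set of pairs $(S,K)$. Since the number of such pairs is $n(n-1)\cdots(n-s+1)\cdot\binom{s}{t}$ and the fiber over each admissible $S$ has exactly $\binom{s}{t}$ elements, this uniform distribution is the product of the uniform law on ordered samples $S$ and the uniform law on $t$-subsets $K$; in particular $S$ and $K$ are independent, and the conditional law of $K$ given $S$ is uniform on $t$-subsets of $\{1,\ldots,s\}$ irrespective of $S$.

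Next I would note that $A_{kj}=a_j(i_k)$ depends only on $S$ (in fact only on $i_k$), whereas $T_k$ is the indicator of $\{k\in K\}$ and so depends only on $K$; random variables measurable with respect to independent $\sigma$-algebras are independent, which gives the claim. For a concrete check one may instead evaluate the four joint probabilities: by symmetry of sampling without replacement $i_k$ is uniform on $N$, so $\Pr(A_{kj}=1)=|a_j|/n$; also $\Pr(T_k=1)=\binom{s-1}{t-1}/\binom{s}{t}=t/s$; and since $K$ is drawn independently of $S$, $\Pr(A_{kj}=1,\,T_k=1)=(|a_j|/n)(t/s)$, with the analogous identities in the other three cells, which is exactly independence.

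I do not anticipate a real obstacle. The one point deserving explicit mention is the factorization of $\sC_0$, i.e.\ that conditioning on the realized sample $S$ leaves the treated set uniform over all $t$-subsets and hence independent of the attribute values carried by $S$. This is immediate from the definition of $\sC_0$ together with the constant fiber size $\binom{s}{t}$, but it is worth stating because it is the only place where the design of the randomized trial — as opposed to the purely-observational setting of Section~\ref{PO}, where treatment assignment is exactly what is \emph{not} controlled — actually enters.
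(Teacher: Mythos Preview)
Your argument is correct and is essentially the same as the paper's: the paper simply asserts ``by construction'' that $\Pr(T_k=1)=\Pr(T_k=1\given A_{kj}=1)=\Pr(T_k=1\given A_{kj}=0)=t/s$, which is exactly the content of your observation that the conditional law of the treated positions given $S$ is uniform and does not depend on $S$. You have spelled out the factorization of $\sC_0$ that the paper leaves implicit, but the underlying idea is identical.
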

\begin{proof}
By construction, for $k=1,\ldots,s$,   \phantom\qedhere
\[ \Pr(T_k=1) ~=~ \Pr(T_k=1\given A_{kj}=1) ~=~ \Pr(T_k=1\given A_{kj}=0) ~=~ \frac{t}{s}~.   ~~~~~~~~~~~~~\hfil\qed\]
\end{proof}
Note that the variables $R_k$ and $A_{kj}$ are not necessarily independent.
In particular, an individual $i \in N_j$ may be more likely to present the response $\sR$ (regardless of treatment),
so it may happen that 
\[ \Pr(R_k=1\given A_{kj}=1) > \Pr(R_k=1\given A_{kj}=0)~.\]

For each individual $ i\in N$,
denote by $\tau_i$ the probability that $i$ would present the response $\sR$ if $i$ were treated with $\sT$, and 
denote by $\nu_i$ the probability that $i$ would present the response $\sR$ if $i$ were {\em not} treated with $\sT$.
Denote 
$$\overline\tau = \frac{1}{n}\cdot\sum_{i=1}^n\tau_i$$ and $$\overline\nu = \frac{1}{n}\cdot\sum_{i=1}^n\nu_i~.$$

\paragraph*{Effect of treatment.} For $k=1,\ldots,s$, the response in a treated individual satisfies:
\begin{equation}  \label{eq:treated}  
\Pr(R_k=1\given T_k=1) 
= 
 \sum_{i=1}^n \Pr(i_k = i\given T_k=1) \cdot \Pr(R_k= 1\given  (i_k = i)\cap ( T_k=1)) 
=  \frac{1}{n} \cdot  \sum_{i=1}^n \tau_i  
 = \overline \tau    ~,
\end{equation}
and the response in an untreated individual satisfies
\begin{equation} \label{eq:untreated}  
 \Pr(R_k=1\given T_k=0) 
= \sum_{i=1}^n \Pr(i_j = i\given T_j=0) \cdot \Pr(R_j= 1\given  (i_j = i)\cap ( T_j=0)) 
 =  \frac{1}{n} \cdot \sum_{i=1}^n  \nu_i 
  =\overline \nu   ~.
\end{equation} 
Hence the expected effect of treatment:
\eq \Pr(R_k=1\given T_k=1)  - \Pr(R_k=1\given T_k=0) =  \overline \tau  -  \overline \nu ~.
\eeq

\paragraph{The (unconditional) probability of response.} It follows that
\[  \eqalign{
 \Pr(R_k=1) 
 = &\   \sum_{i=1}^n \Pr(i_k = i) \cdot \Pr(R_k= 1\given  i_k = i)  \cr
= &\  \frac{t}{s} \cdot \overline \tau +  \left(1 - \frac{t}{s}\right) \cdot \overline \nu   ~,\cr
}
\]
\paragraph*{Attributes in samples.}
First,
\[  
\eqalign{
\Pr(A_{kj}= &\ 1) = \frac{| \{i : a_j(i) = 1 \} | }{n} = \frac{|a_j|}{n} \cr
\mbox{\rm and} ~~~~~~~~~~~~~~~~~~~~~~ & \cr
\Pr( (T_k= &\ 1) \cap (A_{kj}=1) ) = \frac{t}{s}  \cdot \frac{|  a_j |}{n}  ~.
}\]
Since
\[ 
 \Pr((R_k=1)\cap(T_k=1) \cap (A_{kj}=1)) = 
 \sum_{i:\ a_j(i)=1} \tau_i  \cdot \frac{t}{s} \cdot \Pr(i_k=i)   
  = 
    \frac{t}{s}\cdot \frac{1}{n}  \cdot \sum_{i:\ a_j(i)=1} \tau_i ~,
    \]
it follows that 
\[ \Pr(R_k =1\given (T_k=1) \cap (A_{kj}=1) )
 = \frac{\Pr((R_k=1)\cap(T_k=1)\cap (A_{kj}=1))}{\Pr( (T_k=1) \cap (A_{kj}=  1))  }
 =  \frac{1}{|a_j|}\sum_{i:\ a_j(i)=1} \tau_i ~.\]
 Similarly,
\[ \Pr(R_k=1\given (T_k=0) \cap (A_{kj}=1) )
 =  \frac{1}{|a_j|} \sum_{i:\, a_j(i)=1} \nu_i ~,\]
\[ \Pr(R_k=1\given (T_k=1) \cap (A_{kj}=0) )
 =  \frac{1}{n-|a_j|}\sum_{i:\, a_j(i)=0} \tau_i \]
 and 
 \[ \Pr(R_k=1\given (T_k=0) \cap (A_{kj}=0) )
 =  \frac{1}{n-|a_j|}\sum_{i:\, a_j(i)=0} \nu_i ~.\]
 
 \paragraph*{Effect of attribute.}
\[   \eqalign{\Pr(R_k=1 \given A_{kj} =1) 
   =&\ \Pr(T_k=1 \given A_{kj}=1)\cdot  \Pr(R_k =1\given (T_k=1) \cap (A_{kj}=1) )\cr
   &+ \Pr(T_k=0 \given A_{kj}=1)\cdot  \Pr(R_k =1\given (T_k=0) \cap (A_{kj}=1) ) \cr
  =&\  \frac{t}{s} \cdot   \frac{1}{|a_j|}\sum_{i:\, a_j(i)=1} \tau_i
    + \left(1 - \frac{t}{s}\right) \cdot \frac{1}{|a_j|} \sum_{i:\, a_j(i)=1} \nu_i
}\]
\[   \eqalign{\Pr(R_k=1 \given A_{kj} =0) 
   =&\ \Pr(T_k=1 \given A_{kj}=0)\cdot  \Pr(R_k =1\given (T_k=1) \cap (A_{kj}=0) )\cr
   &+ \Pr(T_k=0 \given A_k=0)\cdot  \Pr(R_k =1\given (T_k=0) \cap (A_{kj}=0) ) \cr
  =&\   \frac{t}{s} \cdot   \frac{\sum_{i:\, a_j(i)=0} \tau_i}{n -|a_j|}
    + \left(1 - \frac{t}{s}\right) \cdot \frac{1}{n-|a_j|}\sum_{i:\, a_j(i)=0} \nu_i 
}\]
\[ \eqalign{\Pr &(R_k = 1  \given A_{kj} =1) -  \Pr(R_k=1 \given A_{kj} =0)  \cr
= &\   \frac{t}{s} \cdot \left(   
        \frac{1}{|a_j|}\sum_{i:\,  a_j(i)=1} \tau_i 
     -  \frac{1}{n -|a_j|} \sum_{i:\,  a_j(i)=0} \tau_i\right) 
  +  \left(1 - \frac{t}{s}\right) \cdot\left(\frac{1}{|a_j|}\sum_{i:\,  a_j(i)=1} \nu_i  
  - \frac{1}{n-|a_j|}\sum_{i:\,  a_j(i)=0} \nu_i \right) \cr
  }\]
 
\paragraph*{Conditional effect of treatment.} Given a value of an attribute:
 \[  \eqalign{ \Pr(R_k =1 &  \given (T_k=1) \cap (A_{kj}=1) ) -  \Pr(R_k=1\given (T_k=0) \cap (A_{kj}=1) )   \cr
&\ =  \frac{1}{|a_j|}\sum_{i:\, a_j(i)=1} \tau_i -  \frac{1}{|a_j|}\sum_{i:\, a_j(i)=1} \nu_i \cr
&\ =  \frac{1}{|a_j|} \sum_{i:\, a_j(i)=1} (\tau_i - \nu_i)
~.}\]
\[ \eqalign{\Pr(R_k=1 & \given (T_k=1) \cap (A_{kj}=0) ) - \Pr(R_k=1\given (T_k=0) \cap (A_{kj}=0) ) \cr
&\ = \frac{1}{n-|a_j|}\sum_{i:\, a_j(i)=0} \tau_i  - \frac{1}{n-|a_j|}\sum_{i:\, a_j(i)=0} \nu_i  \cr
&\ =  \frac{1} {n-|a_j|} \sum_{i:\, a_j(i)=0}( \tau_i - \nu_i) ~.
} \]

 {\em Example.}  Suppose an attribute $a_j$ is very strongly correlated with the response $\sR$, so that
 if $a_j(i)=1$, then $\tau_i = \nu_i = 1$, and 
 if $a_j(i)=0$, then $\tau_i = \nu_i = 0$.
 In the purely-observational case, such an attribute may lead to a false conclusion about the effect of treatment on the response, if many of the $i$s with $a_j(i)=1$ are treated, and only a few of them are not treated.
In the controlled-randomized-trial setting, there is no such effect because $\tau_i = \nu_i$ for every
$i$, and therefore
 \[   \Pr(R_k=1  \given T_k =1) -  \Pr(R_k=1 \given T_k =0) =  \overline \tau - \overline\nu =0~.\]

\paragraph{The trial data set.}
Recall (see (\ref{eq:treated}) and (\ref{eq:untreated})) that for $k=1,\ldots,s$,
\[ \Pr(R_k=1  \given T_k =1) = \overline \tau  \]
and 
\[ \Pr(R_k=1  \given T_k =0) = \overline \nu ~.\]
The latter two can obviously be estimated from the trial results as follows.
Let $Z_k = 1$ if both $T_k=1$ and $R_k=1$;  otherwise, let $Z_k=0$  
In other words, $Z_k = T_k\cdot R_k$.
Thus, $ \overline \tau$ can be estimated by 
 \[  \rho_1  \equiv   \frac{1}{s} \cdot \sum_{k=1}^s \frac{ Z_k  }{t/s}
 =   \frac{1}{t} \cdot \sum^s_{k=1} T_k\cdot R_k~. \]
 Similarly, $ \overline \nu$ can be estimated by
 \[  \rho_0 \equiv \frac{1}{s-t} \cdot \sum^s_{k=1} (1-T_k) \cdot R_k ~.\]

 \begin{remark}\rm
In Example \ref{ex:2}, the number of treated individuals is $100$, of which $10$ are from the first $100$
and $90$ are from the last $100$. 
Such a difference is extremely unlikely if the selection of individuals for treatment is randomized as described above.

Let $A\subset N$ be a fixed subset. 
Suppose $S\subset N$ is sampled from a uniform distribution over all the subsets of $N$ of size $s$, and
$T\subset S$ is sampled from a uniform distribution over all the subsets of $S$ of size $t$.
Thus, $T$ is in fact sampled from a uniform distribution over all the subsets of $N$ of size $t$.
Thus, for $0\le r \le t$,
\[ \Pr(|T\cap A| \smalleq r)
    ~=~ \frac{  {|A| \choose r}\cdot {n - |A| \choose t - r }}{{n\choose t}} ~. \]
It is well known that the expectations are
\[ \E[|T\cap A| ] = t\cdot \frac{|A|}{n} \]
and
\[ \E[|T\setminus A| ] = t \cdot \frac{n-|A|}{n} \]
and the variance is
\[ \mathop{\rm var}( |T\cap A|) = \mathop{\rm var}( |T\setminus A|) 
= t \cdot  \frac{|A|}{n} \cdot \frac{n-|A|}{n} \cdot \frac{n - t}{n-1}
< t \cdot  \frac{|A|}{n} \cdot \frac{n-|A|}{n} ~.\]
The Chernoff bounds for the binomial distribution imply
\[ \Pr( |T\cap A| > t\cdot \frac{|A|}{n}\cdot (1 + \varepsilon)
< \exp \left(- \varepsilon^2\cdot\frac{n^2}{2\, t |A|\cdot(|n-|A|)}
  \right)
  \]
and
\[ \Pr( |T\cap A| < t\cdot \left(\frac{|A|}{n} + \varepsilon \right)
< \exp \left(- \varepsilon^2\cdot\frac{n^2}{2\,t|A|\cdot(|n-|A|)}
  \right) ~.
  \]

Denote 
\[  \xi  = \frac{|T\cap A|}{t}  \]
and 
\[ \eta = \frac{|T\setminus A|}{n -|A|} ~.\]

Note that a fixed set $A$ as above corresponds to an attribute $a_A: N\to \{0,1\}$, 
where $a_A(i) = 1$ if and only if $i\in A$.  
Thus, the random sample $S$ and the randomized choice of the subset $T\subset S$ of treated individuals guarantee that for every fixed attribute $a_j(\cdot)$, 
the attribute value  $a_j(i_k)$ of a sampled individual $i_k$, $k=1,\ldots,s$,  and its membership in $T$ are
independent random variables.
This implies that, with high probability, the relative frequency of $i_k$s with $a_j(i_k) =1$ in $T$ and in $S\setminus T$ are very close.  
In particular, suppose $A$ is the set $n/2$ individuals $i$ with the largest effect of treatment $\overline \tau_i -
\overline \nu_i$.
If the set of treated people contains a higher proportion from $A$ than the set of untreated people, then it may appear that treatment is more effective on the average than the true effect.
The randomized controlled trial aims to avoid such mistake.
\end{remark}

\section{\hskip -15pt. Observational data together with a generative model}

\subsection{A generative distribution model}
The causal analysis of observational data relies on the assumption that the observed data was generated according to a certain generative model, 
which is supposed to be {\em a valid representation of reality.}
The analysis is based exclusively on the combination of the observed data and the model.
Thus, only variables that appear in the model can be considered to be causes for any model variables, i.e.,
implicit in the model are {\em all} the possible causes.
Causal analysis aims to quantify the relative strength of various possible causes.
Note that if the data set represents a sample of individual humans from a certain population $N$, 
then, 
despite the fact that each subset of $N$ defines a binary attribute with a corresponding random variable, 
only the variables that are explicitly included in the model are considered to be possible causes. 
In this respect, {\em the model rules out the possibility of any unknown confounding variable}. 

\begin{remark}\rm
The question of how to derive a generative model is of course very important, 
but here we focus on the implication of adopting any model for the purpose of causal analysis.
A generative model for the generation of observational data amounts to a {\em single} joint probability distribution of the
observable variables. 
For causal analysis, at least {\em two} joint distributions are required, namely, 
the distribution that arises when a certain variable $v_j$, the potential cause, is {\em forced} to be a constant equal to $1$ and
the distribution that arise when $v_j$ is {\em forced} to be a constant equal to $0$.
The generative model does not address such forcing at all.  
It only describes how the data is generated without any forcing.
\end{remark}

At least in the context of sampling human individuals from a population $N$, it cannot be assumed that these individuals are identical.  
In this section, we continue to assume the existence of attributes $a_j: N\to \{0,1\}$, $j \in M$, 
even though the values of these attributes are never observed, so they do not appear in the model.
If an attribute is observable, then we should represent it within the model and eliminate it from the set of attributes $\{a_1,\ldots,a_m\}$.
Thus, the values of all the observable attributes of an individual are assumed to be generated according 
to the model {\em after} the sampling from $N$ has taken place.

The variables that appear in the model are associated with members of the sample $S$.
Suppose the (binary) attributes appearing in the model are 
$v_j: S \to \{0,1\}$, $j=1,\ldots,q$, and the model specifies a joint probability distribution $P$ over their values.
Note that the individual $i_k$ still has well-defined attribute values $a_j(i_k)$, $j\in M$.
Thus, $P$ is a probability distribution over the space $\{0,1\}^q$.
A sample of $s$ individuals according to $P$ looks the same as a sample in the purely-observational case, that is,
a set $S=\{i_1,\ldots,i_s\}\subset N$ together with the respective sequence of tuples of attribute values 
$\bD_S=(\bv_{i_1}, \ldots, \bv_{i_s})$, 
where $\bv_{i_k} = (v_1 (i_k), \ldots, v_q (i_k))$, $k=1,\ldots,s$.

The distribution $P$ is not sufficient for causal analysis. 
Causal analysis also requires more detail about the order or partial order in which the values of variables were generated.
The partial order can be described by a directed acylic graph (DAG). 
The choice of a specific DAG has its own challenges, but this is not the topic of this discussion.
Our point here is that the very assumption committing to a set of observed values and their distribution raises difficulties from the viewpoint of causal analysis.

Note that the vectors $\bv_{i_k}$, $k=1, \ldots, s$, of the respective sampled individuals are sampled unconditionally according to $P$, i.e., 
regardless of the values $a_j(i)$ ($j=1,\ldots,m$, $i=1,\ldots,n$), 
which are fixed in advance outside of the generative model.
Thus, the data is observational, 
but there is also a specific hypothetical model that describes how the observable data was generated.  

Suppose an individual $i\in N$ is picked at random.
For each $A \subset N$, let $X_A$ be random variable such that $X_A = 1$ if $i\in A$, and $X_A=0$ otherwise.
Denote  by $\bV$ the random vector of attributes
$(v_1(i),\ldots,v_q(i))$ that are sampled (independently) from the distribution $P$. 
We thus have
\begin{proposition}\label{prop1}
For every $A\subseteq N$ and every vector $\bv \in \{0,1\}^q$,  
\[ \Pr((\{X_A=1\})\cap (\bV = \bv)) = \frac{|A|}{|N|}\cdot P(\bv)~, \]
i.e., the vector $\bV$ is independent of the value of the attribute defined by the set $A$.
\end{proposition}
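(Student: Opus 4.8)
The plan is to reduce the statement to the modelling assumption that the attribute vector $\bV$ is drawn from $P$ \emph{after} the individual has been sampled from $N$ and \emph{without any reference} to the fixed attributes $a_j$, so that the two draws are independent. First I would make the underlying two-stage probability space explicit: a sample point is a pair $(i,\bv)$ with $i\in N$ picked with probability $1/|N|$ and, conditionally on that, $\bv\in\{0,1\}^q$ picked with probability $P(\bv)$, the second choice being made regardless of $i$. Thus $\Pr(\{i \text{ is sampled}\}\cap\{\bV=\bv\}) = \frac{1}{|N|}\,P(\bv)$ for every $i\in N$ and every $\bv\in\{0,1\}^q$.

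Next I would observe that, by definition, the event $\{X_A=1\}$ is the event that the sampled individual lies in $A$, which depends only on the first coordinate of the sample point. Decomposing over the possible values of that coordinate,
\[
  \Pr(\{X_A=1\}\cap\{\bV=\bv\})
  = \sum_{i\in A}\Pr(\{i\text{ is sampled}\}\cap\{\bV=\bv\})
  = \sum_{i\in A}\frac{1}{|N|}\,P(\bv)
  = \frac{|A|}{|N|}\,P(\bv),
\]
which is exactly the claimed identity. Taking $A=N$ gives $\Pr(\bV=\bv)=P(\bv)$, and summing over $\bv$ gives $\Pr(X_A=1)=|A|/|N|$, so the displayed equation indeed expresses that $\bV$ is independent of the indicator of $A$.

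I do not expect any real obstacle: the whole content of the proposition sits in the assumption that $\bV\sim P$ is generated independently of the draw from $N$, and once that is stated precisely --- e.g.\ as ``the conditional law of $\bV$ given the sampled individual equals $P$ for every individual'' --- the argument is just an application of the definition of conditional probability. The one point worth being careful about is not to conflate the random variables $\bV$ (sampled from the model) with the fixed, never-observed attribute values $a_j(i)$; the former are governed by $P$ and the latter play no role here, which is precisely why the conclusion is independence rather than some nontrivial coupling. Equivalently, one may note that $\{X_A=1\}$ and $\{\bV=\bv\}$ are measurable with respect to the two independent factors of a product space, so their probabilities multiply.
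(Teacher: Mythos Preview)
Your argument is correct and is exactly the reasoning the paper has in mind: the paper does not give a separate proof of this proposition but simply writes ``We thus have'' after stipulating that $\bV$ is sampled from $P$ independently of the draw of $i$ from $N$. Your two-stage product-space description and the summation over $i\in A$ merely make that one-line inference explicit.
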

\begin{remark}\rm
It appears that we must distinguish between attributes whose values are fixed in advance for each individual and
attributes whose values are sampled when an individual is selected for the sample. 
The former gives rise to random variables when individuals are picked randomly for the sample. 
The latter constitute random variables by definition.
Proposition \ref{prop1} states the independence of the former random variables and the attributes that appear in the model. 
The assumption that the model is a valid representation of reality implies that the variables that
do not appear in the model must also be independent of the ones that appear in the model.
\end{remark}

\begin{remark}\rm
There is no objection here to estimating the joint probability distribution of the observable variables and using it for predictions.
The objection  here is only to using this distribution for causal inference.
\end{remark}

\subsection{On DAG models}
One possibility of a generative model is formalized as an acyclic directed graph (DAG) $G=(V,E)$, 
whose $q$ vertices correspond to the specific attributes $v_j: S \to \{0,1\}$, $j=1,\ldots,q$,
together with probability distributions as follows. 
First, there are the respective probability distributions
of the attribute values whose vertices in $G$ have no parents, which can be called the {\em initial vertices}.
These attribute values are assumed to be stochastically {\em  independent.}
Second, if a vertex is not initial, then it has parents, and for every combination of attribute values of the parents, 
a probability distribution of the attribute value of this vertex is specified,
given the attribute values of its parent vertices.  
This generative model gives rise to a probability space $\sG$ whose sample space is $\{0,1\}^q$,
where the probability of each $q$-tuple can be calculated from the probability distributions associated with the various vertices of $G$.  


 \begin{remark}\rm
 In medical studies, $X$ may represent some behavior or treatment, and $Y$ may represent a later condition.
 A model that involves only $X$ and $Y$ cannot constitute ``a valid representation of reality'' because there are many
 attributes of an individual subject whose values are determined before the value of $X$ is.
 \end{remark}
 
 \begin{remark}\rm
 Consider the question of whether an {\em uncontrollable} variable $X$ can cause a variable $Y$.
Obviously, the question cannot be answered by running a controlled experiment.
Suppose $X$ predicts $Y$ very well. 
Thus, observing $X$ could be very valuable if some action must be taken which whose result depends on the value of $Y$. 
However, what would be the practical value of knowing whether $X$ causes $Y$?
We cannot influence the value of $X$, so it does not really matter, and preparing for $Y$ in view of the value of $X$ would 
be the same regardless of whether $X$ causes $Y$. 
In contrast, if the variable $X$ is controllable, then it is very valuable to know
whether $X$ causes $Y$ because then it may be possible to control $Y$ rather than just prepare for it.
 \end{remark}

 \begin{remark}\rm
 The generative-model-based causal analysis proceeds by fixing the value of a causal variable in the two possible ways (i.e., ``true'' or ``false'') and 
 evaluating the respective resulting probabilities of the effect variable.
 It has pointed been out in  \cite{geneletti2005aspects}, including a specific example, 
 that such an ``intervention'' in the model does not necessarily represent what might happen in reality.
 This is not surprising because the model is only supposed to represent how the data was generated,
 and the intervention is definitely not included in the data generation without the intervention.
 \end{remark}

 \begin{center}
\bibliographystyle{plain}
\bibliography{causality_probspaces}
\end{center}
\section*{Appendix - an example}
 Imagine a model with only two variables $X$ and $Y$.
 Suppose the value of $X$ is observed first, and the value of $Y$ is observed second.
 Suppose the data suggests that 
 $$p\equiv \Pr(X=1) = 0.5~,$$ 
 $$p_1\equiv \Pr(Y=1\,|\,X=1) = 0.905$$
  and also
 $$p_0\equiv \Pr(Y=0\,|\,X=0) = 0.905~.$$   
 Thus, the value of $X$ predicts the value of $Y$ very well.
 Since the model involves only $X$ and $Y$, the conclusion is that $X$ causes $Y$ quite strongly.
 Now, in reality there may exist a variable $Z$ that is not observed at all but has the following properties:
 \begin{enumerate}
  \itemi
 The value of $Z$ is determined before $X$, and $\Pr(Z=1) = 0.5$,
 \itemii
 $\Pr(X=1\,|\,Z=1) = \Pr(X=0\,|\,Z=0) = 0.95$
 \itemiii
  $\Pr(Y=1\,|\,Z=1) = \Pr(Y=0\,|\,Z=0) = 0.95$
  \itemiv
  $X$ and $Y$ are conditionally independent given $Z$.
 \end{enumerate}
 Note that these properties are consistent with the two-variable model since
 \[ \Pr(X=1) = \Pr(Z=1) \,\Pr(X=1\,|\,Z=1) + \Pr(Z=0) \,\Pr(X=1\,|\,Z=0) = 0.5\cdot 0.95 + 0.5\cdot 0.5 = 0.5 \]
 (and, similarly, $\Pr(Y=1)=0.5$), and also
 \[ \eqalign{ \Pr(Y=1\,|\,X=1) 
 =&\         \Pr(Z=1\,|\,X=1) \cdot \Pr(Y=1\,|\,Z=1)
            +  \Pr(Z=0\,|\,X=1) \cdot \Pr(Y=1\,|\,Z=0) \cr
 =&\         \Pr(X=1\,|\,Z=1)   \cdot\frac{\Pr(Z=1)}{\Pr(X=1) }   \cdot \Pr(Y=1\,|\,Z=1) \cr
            &+  \Pr(X=1\,|\,Z=0)   \cdot\frac{\Pr(Z=0)}{\Pr(X=1) }  \cdot \Pr(Y=1\,|\,Z=0) \cr
    =&\ 0.95^2 + 0.05^2 = 0.905 
 }\]
 and
  \[ \eqalign{ \Pr(Y=0\,|\,X=0) 
 =&\         \Pr(Z=1\,|\,X=0) \cdot \Pr(Y=0\,|\,Z=1)
            +  \Pr(Z=0\,|\,X=0) \cdot \Pr(Y=0\,|\,Z=0) \cr
 =&\         \Pr(X=0\,|\,Z=1)   \cdot\frac{\Pr(Z=1)}{\Pr(X=0) }   \cdot \Pr(Y=0\,|\,Z=1) \cr
            &+  \Pr(X=0\,|\,Z=0)   \cdot\frac{\Pr(Z=0)}{\Pr(X=0) }  \cdot \Pr(Y=0\,|\,Z=0) \cr
    =&\ 0.05^2 + 0.95^2 = 0.905 ~.
 }\]
 Thus, if $Z$ were known, it would be deemed the cause of each of $X$ and $Y$. 
 \end{document}
-----------
If $A$ and $B$ are independent, then
\[  \Pr(\neg A \cap \neg B) = 1 - \Pr(A \cup B) = 1 - \Pr(A) - \Pr(B) + \Pr(A\cap B) 
=  1 - \Pr(A) - \Pr(B) + \Pr(A)\cdot\Pr(B) = (1-\Pr(A))\cdot(1-\Pr(B) = \Pr(\neg A) \cdot \Pr(\neg B) \]
so also $\neg A$ and $\neg B$ are independent.
 \end{document}
 \newpage
 Conjugate to a function $f(x)$ is the function 
 \[ f^*(y) = \sup_x \{y^\top x - f(x) \} \]
 Example1
 \[  f(x) = ax + b \]
 \[  f^*(y) = \sup_x \{ y\,x - ax - b \} = 
 \begin{cases} 
   -b     & ~~ \mbox{if $y =a$ } \\
   \infty &~~ \mbox{if $y \not=a$}
   \end{cases}
 \]  
 Example 2.
 \[ f(x) = x^2 \]
  \[  f^*(y) = \sup_x \{ y\,x -  x^2 \} = y^2 / 4 \]
  \[ f^{**}(x) = \sup_y\{  xy - y^2 / 4\} =  x^2 = f(x) \]

\end{document}